\newtheorem{definition}{Definition}
\newtheorem{observation}{Observation}
\def\avg{\ensuremath{\mathit{avg}}}
\begin{document}

\title{Specious rules: an efficient and effective unifying method for removing misleading and uninformative patterns in association rule mining}

\author{Wilhelmiina H\"am\"al\"ainen$^1$\\
\and 
Geoffrey I.\ Webb$^2$\\
\small $^1$Department of Computer Science, Aalto University, Finland – wilhelmiina.hamalainen@aalto.fi\\
\small $^2$Faculty of IT, Monash University, Melbourne, Australia –
geoff.webb@monash.edu}
\date{}

\maketitle


\begin{abstract} 
We present theoretical analysis and a suite of tests and procedures for addressing a broad class of redundant and misleading association rules we call \emph{specious rules}. Specious dependencies, also known as \emph{spurious}, \emph{apparent}, or \emph{illusory associations}, refer to a well-known phenomenon where marginal dependencies are merely products of interactions with other variables and disappear when conditioned on those variables.
The most extreme example is Yule-Simpson's paradox where two variables present positive dependence in the marginal contingency table but
negative in all partial tables defined by different levels of a confounding factor. It is accepted wisdom that in data of any nontrivial dimensionality it is infeasible to control for all of the exponentially many possible confounds of this nature.  In this paper, we consider the problem of specious dependencies in the context of statistical association rule mining.  We define specious rules and show they offer a unifying framework which covers many types of previously proposed redundant or misleading association rules. After theoretical analysis, we introduce practical algorithms for detecting and pruning out specious association rules efficiently under many key goodness measures, including mutual information and exact hypergeometric probabilities. We demonstrate that the procedure greatly reduces the number of associations discovered, providing an elegant and effective solution to the problem of association mining discovering large numbers of misleading and redundant rules.

\end{abstract}

\noindent\emph{Keywords}: specious dependency, association rule, Yule-Simpson's paradox, mutual information, Birch's test

 \section{Introduction}

Association rule mining is often a useful first step for any data mining
task, because it reveals the statistical dependence structure of
data. A major issue in association rule mining is that analyses often
produce excessive numbers of rules. Even if rules are carefully
pruned with strict statistical measures, the results are likely to
contain large numbers of redundant, unproductive, uninteresting or
even misleading and paradoxical rules. 

In this paper, we investigate the problem of {\em specious
  associations}, also known as spurious, apparent, illusory, or
misleading associations (e.g., \cite{yule1903}). Specious associations
refer to a well-known phenomenon where marginal dependencies appear as
products of interactions with other variables and disappear when
conditioned on those variables. The most extreme example is
Yule-Simpson's paradox where two variables present positive dependence
in the marginal contingency table but negative in all partial tables
defined by different levels of a confounding factor. As far as we
know, the problem has not previously been addressed in
the context of association rule discovery, but there are approaches (e.g.,
\cite{kingfkais,HuangWebb2005,causalrules,webbml}) that prune out
certain subtypes of specious rules, together with other search
constraints.

We begin from basic definitions, tests and properties of specious rules.  
We show that specious rules offer a unifying framework that includes
many previously proposed forms of redundant or misleading association rules as
their special cases. Thus, their removal can drastically reduce the
number of discovered associations. On the other hand, efficient
detection of specious rules offers a solution to the oft-cited problem
of misleading associations in traditional data analysis. We give new
theoretical properties that help to identify specious dependencies
efficiently, without testing all possible (exponentially many)
confounding factors. Based on these, we introduce an efficient
algorithm for detection and removal of specious dependencies.
We report experiments on mining non-specious, statistical
association rules with mutual information, and draw final
conclusions.

\section{Definitions and concepts}

\subsection{Unconditional and conditional dependencies}

According to a classical definition, two events $ X{=}x$ and $C{=}c$
are statistically independent, if $P( X{=}x,C{=}c)=P(
X{=}x)P(C{=}c)$. Otherwise, they are considered dependent. A commonly
used measure for the strength of this unconditional (or marginal) dependence is
leverage $\delta( X{=}x,C{=}c)=P( X{=}x,C{=}c)-P(
X{=}x)P(C{=}c)$. When $X$ and $C$ are (composed or single) binary
variables, $\delta(X,C)=\delta(\neg X,\neg C)=-\delta(X,\neg
C)=-\delta(\neg X,C)$, 
where $X$ and $C$ are used as shorthands for $X{=}1$ and $C{=}1$ and $\neg X$ and $\neg C$ as shorthands for $X{=}0$ and $C{=}0$ respectively. 
Therefore, the dependency can be represented by
giving just one value pair, usually one expressing positive
dependency.

In pattern discovery, statistical dependencies can be expressed as a
type of association rules, which we call {\em statistical association
rules} or {\em dependency rules} (see e.g., \cite{kingfkais}). Such
rules express always statistical dependencies but, unlike traditional association rules, do not have any
minimum frequency requirements. Instead,
there are requirements for the strength and/or significance of
statistical dependencies. We concentrate on dependency
rules of the form $X\rightarrow C{=}c$, $c\in
\{0,1\}$, where $X$ is a set of positive valued binary attributes and
$C$ is a single binary attribute. Rule $X\rightarrow C{=}1$ (or simply
$X\rightarrow C$) expresses a positive dependency between $X$ and $C$
while $X\rightarrow C{=}0$ (or $X\rightarrow \neg C$) expresses a
negative dependency. We note that $X\rightarrow C$, $C\rightarrow X$,
$\neg X \rightarrow \neg C$, and $\neg C \rightarrow \neg X$ express
the same dependency.

When two dependency rules are compared, it is often useful to study the conditional dependence of one rule (e.g., $Q\rightarrow C$) on another with the same consequent attribute (e.g., $X\rightarrow C$). This is equivalent to studying conditional dependencies between $Q$ and $C$ given $X$ or $\neg X$. Now $Q$ and $C$ are conditionally independent given $ X{=}x$, $x\in \{0,1\}$, if 
\begin{equation*}P( X{=}x,Q,C)=\frac{P( X{=}x,Q)P( X{=}x,C)}{P( X{=}x)}.
\end{equation*}

Otherwise they are considered conditionally dependent. To measure the strength of the 
conditional dependence, we extend the leverage measure. 
\textit{Conditional leverage} of $Q\rightarrow C$ given value $X$ is 
$$\delta_c(Q,C|X)=P(X,Q,C)-P(X,Q)P(C|X)$$
and given value $\neg X$ it is 
$$\delta_c(Q,C|\neg X)=P(\neg X,Q,C)-P(\neg X,Q)P(C|\neg X).$$
The sign of the conditional leverage determines whether the given conditional dependency is positive or negative.

\subsection{Specious dependency rules}

In this paper, we are interested in detecting and pruning out specious
dependency rules.  These are rules that occur only as side-products of
other dependencies. Ultimately, specious dependency rules are a
phenomenon of the unknown population from which the sample data is
drawn. Therefore, we give the definition in the population level and
explain how statistical significance testing can be used to detect
specious rules from sample data.

\begin{definition}[Specious dependency rule]
Let $X$ and $Q$ be sets of binary attributes and $C$ a single binary attribute. Dependency rule $Q\rightarrow C{=}c$ ($c\in\{0,1\}$) expressing positive dependency between $Q$ and $C{=}c$ is specious if there
is another rule $X\rightarrow C{=}c_x$ ($c_x\in \{0,1\}$) such that 
$\delta_c(Q,C{=}c|X)\leq 0$ and $\delta_c(Q,C{=}c|\neg X)\leq 0$ in the population.
\end{definition}

Figure \ref{spectypes} shows four alternative ways that rule 
$X\rightarrow C{=}c$ or $X\rightarrow C{\neq}c$ 
can make $Q\rightarrow C{=}c$, $c\in \{0,1\}$ 
specious. Note that in a and d, the specious rule is  $Q\rightarrow C$ and in b and c, it is $Q\rightarrow \neg C$. 

\begin{figure*}[!t]
\begin{center}
\includegraphics[width=\textwidth]{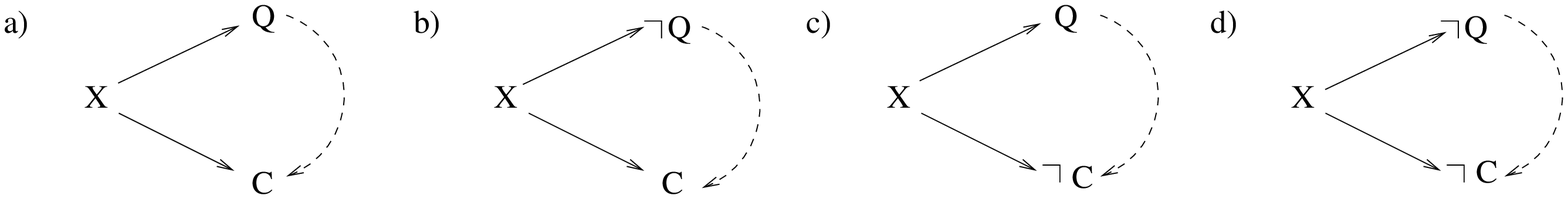}
\caption{Four ways that $X\rightarrow C{=}c$ or $X\rightarrow C{\neq}c$ can make $Q\rightarrow C{=}c$, $c\in \{0,1\}$ specious. The solid lines show genuine statistical dependencies, while the dotted lines show derived dependencies that arise as their side-products.}
\label{spectypes}
\end{center}
\end{figure*}

In the sample level, speciousness of $Q\rightarrow C{=}c$ can be detected by
studying the conditional leverages $\delta_1=\delta_c(Q,C|X)$ and
$\delta_2=\delta_c(Q,C|\neg X)$ and statistical significance of the
conditional dependency. If both $\delta_1\leq 0$ and $\delta_2\leq 0$,
then the observed dependency $Q\rightarrow C{=}c$ is completely due to
variable $X$ and either disappears or changes its direction when
conditioned. However, sample data is only an incomplete presentation
of the underlying population, and it is possible that both conditional
leverages are weakly positive, even if the rule is specious. For detecting such specious rules, one has to test the
significance of the observed deviations from conditional
independence. If the deviations are not significant with a desired
level, then the rule can be considered specious. 

\subsection{Statistical and information-theoretic tests}

Birch's exact test \cite{birch} and mutual information are two robust methods for 
 evaluating significance of partial dependencies and detecting specious rules.
Birch's exact test defines the exact hypergeometric
probability that dependency $Q\rightarrow C{=}c$ is at least as strong as observed, if
$Q$ and $C$ were actually conditionally independent 
given $X$ (null hypothesis). In the original article, Birch gives only the
hypergeometric point probability $P(n_{qc}~|~n,
n_x,n_{xc},n_{xq},n_{q},n_c)$ that the frequency of $QC{=}c$ is exactly
$n_{qc}$, when data size ($n$) and frequencies of $X$, ($X,C{=}c$), ($X,Q$), $Q$
and $C{=}c$ ($n_x$, $n_{xc}$, $n_{xq}$, $n_{q}$, and $n_c$) are fixed. The
corresponding cumulative probability ($p$-value) is obtained by
summing up all point probabilities where the dependency between $Q$
and $C{=}c$ is at least as strong as observed (which is equivalent to
$N_{qc}\geq n_{qc}$ under the null hypothesis of conditional
independence). Now, the test for the speciousness of rule
$Q\rightarrow C{=}c$, given 
$X\rightarrow C{=}c_x$ ($c_x=c$ or $c_x=1-c$) becomes

\begin{multline}
\label{bircheq}
p_B=P(N_{qc}\geq n_{qc}~|~n,n_x,n_{xc},n_{xq},n_{q},n_c)\\
=\frac{\sum_{i+j\geq n_{qc}}\left(n_{xq} \atop i
	\right)\left(n_{x\neg q}) \atop n_{xc}-i\right)
	\left(n_{\neg xq} \atop j
	\right)\left(n_{\neg x\neg q)} \atop n_{\neg xc}-j\right)}{\left(n_x \atop n_{xc} \right)\left(n_{\neg x} \atop
	n_{\neg xc}\right)}.
\end{multline}

One problem is to select a threshold value $\alpha\in ]0,1]$ such that
    rules with $p_B>\alpha$ can be safely pruned out as specious
    (i.e., the null hypothesis of conditional independence is kept). A
    classical threshold in significance testing is $\alpha=0.05$ but a
    smaller or larger value can be used to prune out more or less
    rules as specious.

A corresponding information-theoretic test for speciousness can be constructed using conditional mutual information. With a short-hand notation $p_x=P(X)$ it is defined as 

\begin{multline}
\label{condMI}
MI(Q\rightarrow C|X\rightarrow C)\\
=MI(Q\rightarrow C|X)+MI(Q\rightarrow C|\neg X)\\
=\log \frac{p_x^{p_x}p_{xqc}^{p_{xqc}}p_{xq\neg c}^{p_{xq\neg c}}p_{x\neg qc}^{p_{x\neg qc}}p_{x\neg q\neg c}^{p_{x\neg q\neg c}}}{p_{xq}^{p_{xq}}p_{x\neg q}^{p_{x\neg q}}p_{xc}^{p_{xc}}p_{x\neg c}^{p_{x\neg c}}}\\
+\log \frac{p_{\neg x}^{p_{\neg x}}p_{\neg xqc}^{p_{\neg xqc}}p_{\neg xq\neg c}^{p_{\neg xq\neg c}}p_{\neg x\neg qc}^{p_{\neg x\neg qc}}p_{\neg x\neg q\neg c}^{p_{\neg x\neg q\neg c}}}{p_{\neg xq}^{p_{\neg xq}}p_{\neg x\neg q}^{p_{\neg x\neg q}}p_{\neg xc}^{p_{\neg xc}}p_{\neg x\neg c}^{p_{\neg x\neg c}}}.
\end{multline}

This measure is computationally easy to evaluate and it has also other
attractive properties which help in the search. However, there is one
drawback since $MI$ is always non-negative and, therefore, negative and positive 
conditional dependence cannot be separated. As a
solution, we have modified the measure such that the signs of terms are
reversed in the case of negative conditional dependency (i.e.,
$MI(Q,C|X=x)$ becomes $-MI(Q,C|X=x)$ if $\delta_c(Q,C|X=x)<0$). 
The modified (signed) measure is denoted $MI_S$. 
Once again, one should decide a minimum $MI_S$ value which is required
for non-speciousness.

Similarly, one can design a $\chi^2$-based test for speciousness (like
Mantel-Haenszel's test and its variations that asymptotically
approximate $p_B$), but it has the same problem as $MI$ in separating
positive and negative conditional dependencies.

\section{Relationship to redundancy, equivalence and Yule-Simpson's paradox}
\label{specialcases}

\subsection{Speciousness of specialized and generalized rules}
\label{specase1}

Speciousness is closely related to the concepts of productivity
\cite{webbml} and redundancy \cite{kingfkais} and the question of
whether a more general rule $X\rightarrow C{=}c$ or its specialization
$Q\rightarrow C{=}c$ is superfluous. There are two possible cases,
where $X\rightarrow C{=}c$ makes $Q\rightarrow C{=}c$ specious and
either $X\subset Q$ or $Q\subset X$ (where $\subset$ denotates a
proper subset). For simplicity, we represent the results only for
positive consequent $C{=}1$.

In the first case, rule $X\rightarrow C$ is a generalization of rule
$Q\rightarrow C$, i.e., $X\subset Q$. In the next section we will show
that this phenomenon can occur only if dependency $X\rightarrow C$ is more
significant than $Q\rightarrow C$, with many types of significance measures.
However, this means that $Q\rightarrow C$ is 
superfluous or redundant with respect to $X\rightarrow C$. Now
$\delta_2=0$ and the speciousness depends on $\delta_1$, whether
$\delta_1\leq 0$ or $\delta_1>0$ but the deviation from zero is
insignificant.  If we mark $Q=XZ$ and the corresponding frequency by $n_{xz}$, the exact test for the speciousness of $XZ\rightarrow C$ given $X\rightarrow C$ reduces to
\begin{equation*}
p_B=\frac{\sum_{i\geq n_{xzc}}\left(n_{xz} \atop i
	\right)\left(n_{x\neg z} \atop n_{xc}-i\right)}{\left(n_x \atop n_{xc} \right)}.
\end{equation*}
An interesting coincidence is that this is the same as the test for
the significance of productivity in \cite{webbml}. With mutual information, the test is simply $MI_S=MI(XZ\rightarrow C|X)$ (assuming $\delta_1>0$).

In the second case, rule $X\rightarrow C$ is a specialization of rule
$Q\rightarrow C$, i.e., $Q\subset X$. Once again, it will turn out that $X\rightarrow C$
has to be more significant than $Q\rightarrow C$ to make it specious, given an appropriate significance measure (see Section
\ref{detecting}).
This means that one type of specious rules are
generalizations of non-redundant rules (rules $X\rightarrow C$ which
are better than any $Y\rightarrow C$, $X\subset Y$ with the given 
significance measure). 
Now $\delta_1=0$, and the speciousness depends solely on $\delta_2$. 
If we mark $X=QZ$ and the corresponding frequency by $n_{qz}$, the exact test for the speciousness reduces to 
\begin{equation*}
p_B=\frac{\sum_{j\geq n_{q\neg zc}}\left(n_{q\neg z} \atop j
  \right)\left(n_{\neg q} \atop n_{\neg qc}-j\right)}{\left(n-n_{qz} \atop n_c-n_{qzc}\right)}.
\end{equation*}

With mutual information, the equivalent test is $MI_S(Q\rightarrow
C|QZ\rightarrow C)=MI(Q\rightarrow C|\neg(QZ))=MI(\neg Q\rightarrow
\neg C|\neg(QZ)) $ (assuming $\delta_2>0$). This is the same as the
significance of productivity of $\neg Q\rightarrow \neg C$ with
respect to $\neg (QZ)\rightarrow \neg C$.

A further connection to redundancy testing occurs if we compare a rule
$X\rightarrow C$ and its specialization $XZ\rightarrow C$ using
speciousness tests. If $MI_S(X\rightarrow C|XZ\rightarrow C)=MI(\neg
X\rightarrow \neg C|\neg (XZ))$ is too small, then $X\rightarrow C$ is
specious, and if $MI_S(XZ\rightarrow C|X\rightarrow
C)=MI(XZ\rightarrow C|X)$ is too small, then $XZ\rightarrow C$ is
specious. Since both cannot occur simultaneously, only the rule with a
smaller conditional $MI$ value can be specious. It turns out that this
is the same as comparing the unconditional $MI$ values of rules. Thus,
the test for redundancy in \cite{kingfkais} (i.e., testing if
$MI(XZ\rightarrow C)<MI(X\rightarrow C)$) is a special case of
speciousness testing. It is also equivalent to comparing the
significance of productivity of $XZ\rightarrow C$ with respect to
$X\rightarrow C$ and of $\neg X\rightarrow \neg C$ with respect to
$\neg(XZ)\rightarrow \neg C$.

\subsection{Equivalent rules}

A special case of speciousness occurs when two sets of attributes, $Q$
and $X$, cover the same set of elements, i.e., when
$cov(X)=cov(Q)$. In this case, $X$ and $Q$ are called {\em equivalent}
and they have exactly the same dependency rules. In this special case,
speciousness of $Q\rightarrow C{=}c$ by $X\rightarrow C{=}c$ is of type a
or c in Fig \ref{spectypes}. On the other hand, if $X$ and $\neg Q$
are equivalent, then $X$ and $Q$ have complement rules of the form
$X\rightarrow C{=}c$ and $Q\rightarrow C{\neq}c$. This corresponds to
cases b and d in Fig \ref{spectypes}. 

For a pair of equivalent rules, all conditional leverages
(of $Q\rightarrow C{=}c$ given $X\rightarrow C{=}c$ or $X\rightarrow C{\neq}c$  
and vice versa) are zero. This matches the definition of
speciousness, even if one cannot consider them specious per se.
Still, equivalent rules capture the same information and there is no need to
report both, once the equivalence rule $X\rightarrow Q$ is
reported. This type of speciousness has potentially important
computational implications, because it enables radical pruning of the
search space.

Equivalent rules are also related to the concepts of {\em closed item sets} and
{\em minimal generators} \cite{pasquier99}. An attribute set $X$ is called closed, if for all 
$Y\supset X$ holds $P(Y)<P(X)$, and it is called a minimal generator if for all $Y\subset X$ 
holds $P(Y)>P(X)$. Now if $X$ and $Q$ are closed sets with the
same minimal generator $Z=X\cap Q\neq \emptyset$, then
$cov(X)=cov(Q)=cov(Z)$ and both $X\rightarrow C{=}c$ and $Q\rightarrow C{=}c$ 
are equivalent with $Z\rightarrow C{=}c$ and thus
specious. However, specious rules of this type are also superfluous
specializations explained in the previous subsection and are pruned
as such. We recall that $Z=X\cap Q$ is not necessarily a
generator at all (i.e., $P(X|Z)<1$ and $P(X|Q)<1$), even if $X$ and $Q$ 
were equivalent. In summary, 
all rules $Q\rightarrow C{=}c$ and $X\rightarrow C{=}c$, whose antecedents
have the same minimal generator $Z$, are mutually equivalent and
specious, but all mutually equivalent rules do not have the same
minimal generator.

\subsection{Yule-Simpson's paradox}

Speciousness is closely related to Yule-Simpson's
paradox (see e.g., \cite{Alin2010,simpson1951}). The paradox
refers to the situation where the unconditional dependency between $Q$ and
$C{=}c$ is positive, but becomes negative when conditioned on either $X$
or $\neg X$. In a wider definition, disappearing dependences are also included here, i.e., there is marginal independence but conditional dependence or vice versa, like in the original definition by Yule \cite{yule1903}. In this sense, the core question of speciousness is whether Yule-Simpson's paradox occurs in the population. 
Assuming positive dependency $Q\rightarrow C{=}c$, the paradox can be expressed formally by three conditions

{\footnotesize
\begin{itemize}
\item[(i)] $P(C{=}c|Q)>P(C{=}c|\neg Q)$  i.e. $\delta(Q,C{=}c)>0$;
\item[(ii)] $P(C{=}c|Q,X)\leq P(C{=}c|\neg Q,X)$ i.e. $\delta_c(Q,C{=}c|X)\leq 0$;
\item[(iii)] $P(C{=}c|Q,\neg X)\leq P(C{=}c|\neg Q,\neg X)$ i.e. $\delta_c(Q,C{=}c|\neg X)\leq 0$.
\end{itemize}}
In this case, $Q\rightarrow C{=}c$ is specious by definition. However,
it may easily go unnoticed in the traditional data analysis involving
only pair-wise dependencies. Furthermore, it is possible that the
confounding variable $X$ is not present in the data. 

A classical example demonstrates a specious dependency between a new treatment and recovery (see e.g., \cite{lindley}). In the example, a positive marginal dependency between treatment and recovery is observed when the patients are considered as a whole. However, when the data is stratified by the sex of the patients, it turns out that the new treatment is negatively associated with recovery both for women and men. The explanation is that men were more often selected to have the new treatment and they also had a higher recovery rate. 

One can easily show (by setting $\delta_1\leq 0$ and $\delta_2\leq 0$)
that Yule-Simpson's paradox can occur only if
$$\delta(Q,C)\leq \frac{\delta(X,Q)\delta(X,C)}{P(X)P(\neg X)}.$$
This means that it can be avoided if either $Q$ or $C$ is independent
from $X$.  In the example of treatment and recovery, the paradox would
have been avoided, if the same proportion of men and women had been
allocated to the new treatment, thus removing the dependency between
the sex and treatment. However, this is seldom possible in
retrospective studies or exploratory data analysis and examples of
Yule-Simpson's paradox occur in real world data. One solution to the
problem could be dependency rule analysis with speciousness
detection. If dependency rules were searched from the
treatment--recovery data, it would immediately be clear that the
strongest and most significant dependencies are {\em Male
  $\rightarrow$ Treatment} and {\em Male $\rightarrow$ Recovery},
while {\em Treatment $\rightarrow$ Recovery} is clearly weaker and
relatively insignificant. Still, an exhaustive search would probably
output it, together with many other specious rules, because they do
not check interrelations of dependencies.

\section{Detecting specious associations}
\label{detecting}

\subsection{Theoretical results}

Detecting specious dependency rules reduces to the following core
problem: Given two rules, $r_1$: $Q\rightarrow C{=}c$ and $r_2$:
$X\rightarrow C{=}c_x$, $c,c_x\in \{0,1\}$, is $r_1$ specious by $r_2$,
$r_2$ by $r_1$, or neither? In essence, this is the same as asking:
How probable it is to observe at least as strong $Q\rightarrow C{=}c$, if $Q$ and
$C$ were conditionally independent given dependency $X\rightarrow
C{=}c_x$? If it is more probable than the opposite, observing as strong
$X\rightarrow C{=}c_x$, given $Q\rightarrow C{=}c$, then at most $r_1$ can be
specious. In the opposite case, at most $r_2$ can be specious. In both
cases, it is also required that the probability is sufficiently
high. There is also a special case, where the probabilities are
equal. Then one cannot say that either rule is specious, because we cannot have a circular argument, where $r_1$ is made specious by $r_2$ and $r_2$ by $r_1$. 
(Still, it suffices to report only one of the rules, if they are equivalent
and the equivalence information is also reported.) Therefore, we can
from now on assume that there are only three alternatives: either one
of the rules is specious by the other or none of them is.

A brute-force approach for detecting specious rules would evaluate all
possible rule pairs $r_1$ and $r_2$. However, this is quite an
intractable solution because the universe of all possible rules
defined by $k$ attributes is $\mathcal{O}(2^k)$. As a solution, we will show
that when the rules are searched with appropriate goodness measures, 
then the speciousness of the best $K$ rules (or all
sufficiently good rules) can be checked exhaustively among those $K$
rules, ignoring the rest. The foundation of this powerful result is
the following observation on the relationship between marginal and
conditional independence relations.

\begin{observation}
Let us consider relationships of three binary variables $X$, $Q$, and $C$.
Independence assumptions are presented by the following null hypotheses:
$H_{01}$: $X$ and $C$ independent, $H_{02}$: $Q$ and $C$ independent,
$H_{03}$: $Q$ and $C$ conditionally independent given $X$, 
$H_{04}$: $Q$ and $C$ conditionally independent given $\neg X$,
$H_{05}$: $X$ and $C$ conditionally independent given $Q$,
$H_{06}$: $X$ and $C$ conditionally independent given $\neg Q$.
Then the composed null hypotheses $(H_{01}\wedge H_{03}\wedge H_{04})$ and
$(H_{02}\wedge H_{05}\wedge H_{06})$ are equivalent.
\end{observation}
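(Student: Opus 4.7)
My plan is to show that both composite hypotheses are equivalent to a single joint-independence statement, namely $C \perp (X,Q)$, i.e., $P(X{=}x,Q{=}q,C{=}c) = P(X{=}x,Q{=}q)\,P(C{=}c)$ for every assignment of values. Since this reformulation is symmetric in the roles of $X$ and $Q$, the equivalence of $(H_{01}\wedge H_{03}\wedge H_{04})$ and $(H_{02}\wedge H_{05}\wedge H_{06})$ will then follow immediately.

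First, I would verify that $(H_{01}\wedge H_{03}\wedge H_{04}) \Rightarrow C \perp (X,Q)$. Hypotheses $H_{03}$ and $H_{04}$ together assert $P(C\mid X{=}x,Q) = P(C\mid X{=}x)$ for both $x\in\{0,1\}$, i.e., $P(C\mid X,Q)=P(C\mid X)$ as a function of the random variables. Combining this with $H_{01}$, which gives $P(C\mid X{=}x) = P(C)$ for both values of $x$, yields $P(C\mid X,Q)=P(C)$. Multiplying by $P(X,Q)$ delivers the joint-independence factorization.

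Next, I would show the converse direction. Starting from $P(X,Q,C) = P(X,Q)\,P(C)$, marginalizing over $X$ immediately gives $H_{02}$, while dividing by $P(Q{=}q)$ (for each fixed $q\in\{0,1\}$ with positive probability) gives $P(X,C\mid Q{=}q)=P(X\mid Q{=}q)\,P(C)$ and in particular $P(C\mid X,Q{=}q) = P(C) = P(C\mid Q{=}q)$, which is exactly $H_{05}$ (for $q{=}1$) and $H_{06}$ (for $q{=}0$). The same argument with the roles of $X$ and $Q$ exchanged — marginalizing or conditioning on $Q$ instead of $X$ — handles the remaining direction $(H_{02}\wedge H_{05}\wedge H_{06}) \Rightarrow C\perp(X,Q) \Rightarrow (H_{01}\wedge H_{03}\wedge H_{04})$.

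The argument is essentially an instance of the contraction axiom of the graphoid calculus combined with decomposition and weak union, so there is no deep obstacle; the main care needed is bookkeeping around degenerate strata. If $P(X{=}x)=0$ or $P(Q{=}q)=0$ for some value, the corresponding conditional independence hypothesis is vacuous, but then the joint-distribution constraint on that stratum is also trivial, so both composite hypotheses collapse consistently and the equivalence is preserved. I would note this briefly and otherwise work on the support where all marginals are strictly positive.
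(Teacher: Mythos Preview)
Your proposal is correct and follows essentially the same route as the paper: both arguments reduce the two composite hypotheses to the single joint-independence constraint $P(X{=}x,Q{=}q,C{=}c)=P(X{=}q,Q{=}q)P(C{=}c)$, which the paper expresses as the triplet $(p_xp_c,p_qp_c,p_{xq}p_c)$ and you name explicitly as $C\perp(X,Q)$. Your write-up is slightly more thorough in that it explicitly argues both implications and notes the degenerate-stratum case, and it frames the manipulation via the graphoid contraction/decomposition/weak-union axioms rather than raw probability arithmetic, but the underlying content is the same.
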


\begin{proof}
The proof is by showing that both composed null hypotheses describe the same 
data distribution. We recall that the distribution can be represented by triplet $(n_{xc},n_{qc},n_{xqc})$, when $n$, $n_c$, $n_x$, $n_q$, and $n_{xq}$ are given. 

Assuming $H_{01}\wedge H_{03}\wedge H_{04}$ means that
$p_{xc}=p_xp_c$, $p_{xqc}=\frac{p_{xq}p_{xc}}{p_x}=p_{xq}p_c$, and
$p_{\neg xqc}=\frac{p_{\neg xq}p_{\neg xc}}{p_{\neg x}}=p_{\neg xq}p_c$. 
Therefore, $p_{qc}=p_{xqc}+p_{\neg xqc}=(p_{xq}+p_{\neg xq})p_c=p_qp_c$, 
and the triplet is $(p_xp_c,p_qp_c,p_{xq}p_c)$.

Assuming $H_{02}\wedge H_{05}\wedge H_{06}$ means that $p_{qc}=p_qp_c$, 
$p_{xqc}=\frac{p_{xq}p_{qc}}{p_q}=p_{xq}p_c$, and
$p_{x\neg qc}=\frac{p_{x\neg q}p_{\neg qc}}{p_{\neg q}}=p_{x\neg q}p_c$. 
Therefore, $p_{xc}=p_{xqc}+p_{x\neg qc}=(p_{xq}+p_{x\neg q})p_c=p_xp_c$, 
which leads to the same triplet $(p_xp_c,p_qp_c,p_{xq}p_c)$.
\end{proof}

\begin{corollary}
Let $n$, $n_c$, $n_x$, $n_q$, and $n_{xq}$ be given. Then for the probability distributions of $n_{xc}$ and $n_{qc}$ given null hypotheses holds 
\begin{multline*}
P(n_{xc}|H_{01})P(n_{qc}|n_{xc},H_{03}\wedge H_{04})=\\
P(n_{xc},n_{qc}|H_{01}\wedge H_{03}\wedge H_{04})=
P(n_{xc},n_{qc}|H_{02}\wedge H_{05}\wedge H_{06})\\
=P(n_{qc}|H_{02})P(n_{xc}|n_{qc},H_{05}\wedge H_{06})
\end{multline*}
and thus 
\begin{equation}
\label{probratio}
\frac{P(n_{xc}~|~H_{01})}{P(n_{qc}~|~H_{02})}=
\frac{P(n_{xc}~|~n_{qc},H_{05}\wedge H_{06})}{P(n_{qc}~|~n_{xc},H_{03}\wedge H_{04})}.
\end{equation}
\end{corollary}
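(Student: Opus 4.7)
The plan is to derive the four-term equality by combining the observation with the probability chain rule, and then to read off equation~(\ref{probratio}) as a simple algebraic rearrangement. The middle equality $P(n_{xc}, n_{qc}\,|\,H_{01}\wedge H_{03}\wedge H_{04}) = P(n_{xc}, n_{qc}\,|\,H_{02}\wedge H_{05}\wedge H_{06})$ is immediate from the observation: since both composed hypotheses yield the same cell probabilities $p_{xqc} = p_{xq}p_c$ (and likewise for every other cell), they induce the same joint law on the triplet $(n_{xc}, n_{qc}, n_{xqc})$ under the fixed margins $n, n_c, n_x, n_q, n_{xq}$, hence the same law on any pair.

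For the two outer equalities, I would apply the chain rule $P(A, B\,|\,H) = P(A\,|\,H)\,P(B\,|\,A, H)$ to each joint probability and then reduce the conditioning. On the left side, factoring $n_{xc}$ first: the marginal of $n_{xc}$ under $H_{01}\wedge H_{03}\wedge H_{04}$ depends only on the $(X, C)$-cell probabilities $p_xp_c$ fixed by $H_{01}$ alone, and conditionally on $n_{xc}$ the law of $n_{qc}$ is the convolution of two independent hypergeometric draws inside the $X$- and $\neg X$-strata, which is exactly what $H_{03}\wedge H_{04}$ specify. On the right side, factoring $n_{qc}$ first by the symmetric argument, $n_{qc}$ has the marginal dictated by $H_{02}$ and $n_{xc}$ given $n_{qc}$ has the law dictated by $H_{05}\wedge H_{06}$. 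Equating the two factorizations through the middle equality gives
\begin{equation*}
P(n_{xc}\,|\,H_{01})\,P(n_{qc}\,|\,n_{xc}, H_{03}\wedge H_{04}) = P(n_{qc}\,|\,H_{02})\,P(n_{xc}\,|\,n_{qc}, H_{05}\wedge H_{06}),
\end{equation*}
and dividing through by $P(n_{qc}\,|\,H_{02})\,P(n_{qc}\,|\,n_{xc}, H_{03}\wedge H_{04})$ yields exactly (\ref{probratio}).

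The main obstacle, I expect, is making the two conditioning reductions $P(n_{xc}\,|\,H_{01}\wedge H_{03}\wedge H_{04}) = P(n_{xc}\,|\,H_{01})$ and $P(n_{qc}\,|\,n_{xc}, H_{01}\wedge H_{03}\wedge H_{04}) = P(n_{qc}\,|\,n_{xc}, H_{03}\wedge H_{04})$ rigorous rather than merely intuitive. The cleanest route is to trace them back to the explicit cell-probability decomposition established in the observation: marginalising that decomposition over $Q$ reproduces precisely the $H_{01}$-marginal on $(X, C)$, while the conditional distribution of $Q$ within each $X$-stratum is governed entirely by $H_{03}\wedge H_{04}$ and does not depend on whether $H_{01}$ is additionally assumed. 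Once those two reductions are spelled out in this way, the remainder of the argument is routine bookkeeping.
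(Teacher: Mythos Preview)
Your proposal is correct and follows the approach the paper intends: the paper states the corollary immediately after the observation without a separate proof, treating it as a direct consequence of the equivalence of the two composed null hypotheses together with the chain-rule factorizations you spell out. Your explicit justification of the conditioning reductions (via the cell-probability decomposition from the observation) supplies exactly the detail the paper leaves implicit.
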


Equation \ref{probratio} means that for judging whether
$P(n_{xc}~|~n_{qc},H_{05}\wedge H_{06})<P(n_{qc}~|~n_{xc},H_{03}\wedge H_{04})$ 
(i.e., if $r_2$ could make $r_1$ specious), it suffices to check whether
$P(n_{xc}~|~H_{01})<P(n_{qc}~|~H_{02})$. This can be done by fixing
the sampling model and estimating the point probabilities either
directly or indirectly. However, it is not necessary to estimate the
exact probabilities, because most goodness measures for statistical
dependencies reflect the same distributions under the same null
hypotheses. We will shortly show that for speciousness detection, it
suffices that the goodness measure and its conditional counterpart are
{\em order-homomorphic}, as defined in the following:

\begin{definition}
Let $M$ be a goodness measure for dependency rules and $M_c$ its conditional 
variant, such that  
$M(Q\rightarrow C{=}c)$ measures marginal dependence between $Q$ and $C{=}c$ 
and $M_c(Q\rightarrow C{=}c|X\rightarrow C{=}c_x)$ measures conditional dependence
between $Q$ and $C{=}c$ given $X\rightarrow C{=}c_x$. $M$ and $M_c$ are called 
{\em order-isomorphic}, if for all rules $r_1$, $r_2$ 
$$M_c(r_1|r_2)\leq M_c(r_2|r_1) \Leftrightarrow M(r_1)\leq M(r_2)$$
and {\em order-homomorphic} if for all rules $r_1$, $r_2$  
$$M_c(r_1|r_2)\leq M_c(r_2|r_1) \Rightarrow M(r_1)\leq M(r_2).$$
Order-isomorphic measures $M$ and $M_c$ are called {\em distance-preserving}, 
if $M_c(r_1|r_2)-M_c(r_2|r_1)=M(r_1)-M(r_2)$ and {\em ratio-preserving} if 
$\frac{M_c(r_1|r_2)}{M_c(r_2|r_1)}=\frac{M(r_1)}{M(r_2)}.$
\end{definition}

It turns out that many commonly used measures are order-homomorphic or
even order-isomorphic. For example, exact hypergeometric point
probabilities are ratio-preserving ($P$s in Eq. \ref{probratio} are
replaced by hypergeometric probabilities) and mutual information and
log-likelihood ratio (2$MI$) are distance-preserving. For the
$\chi^2$-measure, order-homomorphism may not hold exactly but
asymptotically \cite{lewis1962}. It is an open problem whether
Fisher's $p$-value (cumulative hypergeometric probability) is
order-homomorphic with Birch's $p$-value, like the corresponding point
probabilities. If this could be proved, they would likely be the most
robust measures for finding genuine rules.

Efficient speciousness detection is based on the following simple yet powerful theorem:

\begin{theorem}
Let $(M, M_c)$ be a pair of order-homomorphic measures, where $M$ is a marginal and $M_c$ a conditional goodness measure for dependency rules. We assume that $M$ and $M_c$ are increasing (decreasing) by goodness, i.e., high (low) values indicate significant dependence. In addition, we assume that for any rule $r_1$ which is specious by another rule $r_2$ holds $M_c(r_1|r_2)<M_c(r_2|r_1)$ and 
$M_c(r_1|r_2)\leq \theta$ (for decreasing measures $M_c(r_1|r_2)>M_c(r_2|r_1)$ and $M_c(r_1|r_2)\geq \theta$), where $\theta$ is a predefined threshold. Then dependency rule $r_1$ can be made specious only by such $r_2$ for which $M(r_1)<M(r_2)$ ($M(r_1)>M(r_2)$).
\end{theorem}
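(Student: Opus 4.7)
The plan is to derive the conclusion by feeding the theorem's hypothesis directly into the defining implication of order-homomorphism. I would handle the increasing case first, the decreasing case being fully symmetric under reversal of all inequalities.

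First, suppose $r_1$ is made specious by some $r_2$. The standing assumption of the theorem supplies, for free, the strict comparison $M_c(r_1|r_2) < M_c(r_2|r_1)$, and in particular the weak inequality $M_c(r_1|r_2) \leq M_c(r_2|r_1)$. Next, I would invoke the definition of order-homomorphism for $(M, M_c)$, which reads
\begin{equation*}
M_c(r_1|r_2) \leq M_c(r_2|r_1) \;\Rightarrow\; M(r_1) \leq M(r_2).
\end{equation*}
This gives $M(r_1) \leq M(r_2)$ immediately. The decreasing case is identical once every $\leq$ is replaced by $\geq$ and every $<$ by $>$, so no separate argument is needed there.

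The step I expect to be the actual obstacle is sharpening this weak inequality to the strict inequality $M(r_1) < M(r_2)$ that appears in the statement. Order-homomorphism is a one-sided weak implication and, by itself, does not exclude the boundary case $M(r_1) = M(r_2)$: applying the defining implication to the pair $(r_2, r_1)$ requires the antecedent $M_c(r_2|r_1) \leq M_c(r_1|r_2)$, which we do not have, and its contrapositive only constrains the strict regime $M(r_1) > M(r_2)$. To close the gap I would take one of two routes. The clean route is to observe that the concrete measures the paper actually uses — exact hypergeometric point probabilities (ratio-preserving), $MI$, and $2MI$ (distance-preserving) — are order-isomorphic, so the strict inequality $M_c(r_1|r_2) < M_c(r_2|r_1)$ transfers to the marginal side without loss. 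The fallback route is to argue that the equality case $M(r_1) = M(r_2)$ is precisely the equivalence situation already discussed in Section \ref{specialcases} and excluded from the definition of speciousness (equivalent rules are not considered specious per se, and their conditional leverages are all zero, which contradicts the strict conditional ordering). Either route removes the boundary case and yields $M(r_1) < M(r_2)$, completing the proof.
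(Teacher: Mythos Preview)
Your approach is exactly the paper's: the paper's entire proof is the single sentence ``The proof follows directly from the definitions of speciousness and order-homomorphism,'' and you have spelled out precisely that chain (speciousness hypothesis $\Rightarrow$ strict conditional ordering $\Rightarrow$ weak marginal ordering via order-homomorphism).

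Where you go beyond the paper is in flagging the strict-versus-weak issue, and you are right that order-homomorphism as defined gives only $M(r_1)\leq M(r_2)$, not the strict $M(r_1)<M(r_2)$ in the conclusion. The paper simply does not address this gap; it treats the result as immediate. Of your two proposed fixes, the first (falling back on order-isomorphism, which the paper's actual measures --- hypergeometric point probabilities, $MI$, $2MI$ --- all satisfy) is sound and is effectively what the paper relies on in practice. Your second route is shakier: equality $M(r_1)=M(r_2)$ does not by itself force $r_1$ and $r_2$ to be equivalent rules in the sense of Section~\ref{specialcases}, so that argument does not close the gap in general. What the paper does do, in the paragraph just before the theorem, is declare by convention that when the conditional probabilities coincide neither rule is deemed specious by the other; read charitably, that convention is what excludes the boundary case, though it is not encoded in the formal hypotheses. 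In short: your proof matches the paper's, and you have been more careful than the paper about a point it leaves implicit.
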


\begin{proof}
The proof follows directly from the definitions of speciousness and 
order-homomorphism.
\end{proof}

This has an important consequence from an algorithmic point of view. For simplicity, we give the result only for increasing goodness measures.

\begin{corollary}
Let $R$ be a set of binary attributes and $\mathcal{U}$ a universe of all possible rules $X\rightarrow C{=}c$, $C\in R$, $X\subseteq R\setminus\{C\}$, $c\in \{0,1\}$. Let $(M, M_c)$ be an order-homomorphic pair of increasing goodness 
measures and $\tau$ a minimum threshold for $M$ (alternatively, $M$-value of the $K$th best rule). Further, let    
$\mathcal{R_{\tau}}=\{r~|~r\in \mathcal{U}, M(r)\geq \tau\}$ denote the set of all rules having sufficiently good $M$-value in the given data. Then,  
for detecting speciousness of any $r_1\in \mathcal{R}_{\tau}$, it suffices to 
check $r_2\in \mathcal{R}_{\tau}$, $r_2\neq r_1$, having $M(r_2)\geq M(r_1)$. Furthermore, $r_1$ (or its reverse rule, with the antecedent and consequent reversed) and $r_2$ (or its reverse rule) should have the same attribute as the consequent.
\end{corollary}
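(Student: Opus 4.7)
The plan is to derive the corollary as an essentially immediate consequence of the preceding Theorem together with the syntactic form of the speciousness definition. The first claim to establish is that whenever $r_1 \in \mathcal{R}_\tau$ is made specious by some rule $r_2$, we necessarily have $r_2 \in \mathcal{R}_\tau$ and $M(r_2) \geq M(r_1)$. This is almost a one-line deduction: the Theorem, applied under the hypothesis that $(M,M_c)$ is order-homomorphic and increasing, tells us that if $r_1$ is specious by $r_2$ then $M(r_1) < M(r_2)$. Combining this strict inequality with the assumption $M(r_1) \geq \tau$ yields $M(r_2) > \tau$, which is exactly what is needed to conclude $r_2 \in \mathcal{R}_\tau$. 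Thus the exhaustive check for witnesses of $r_1$'s speciousness can be confined to the subset of $\mathcal{R}_\tau$ whose $M$-value is at least $M(r_1)$.

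The second claim — that $r_1$ and $r_2$, possibly after reversing antecedent and consequent, must share the same consequent attribute $C$ — is read off the definition of specious dependency rule. That definition only considers pairs of rules of the forms $Q\rightarrow C{=}c$ and $X\rightarrow C{=}c_x$, i.e.\ both involving $C$ on the right-hand side. Invoking the equivalence noted in Section 2.1, namely that $X\rightarrow C$, $C\rightarrow X$, $\neg X\rightarrow \neg C$, and $\neg C\rightarrow \neg X$ all describe the same dependency, one can always orient any rule so that the distinguished attribute $C$ is the consequent. Consequently, rule pairs whose consequents (after at most one reversal) do not agree cannot satisfy the speciousness definition and can safely be excluded from the search.

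The main obstacle, to the extent there is one, is purely bookkeeping. One must be slightly careful that the bound derived from the Theorem is $M(r_2) > M(r_1)$ rather than $M(r_2) \geq \tau$ directly; the membership $r_2 \in \mathcal{R}_\tau$ is inferred only by chaining through $M(r_1) \geq \tau$. Similarly, one must verify that the symmetries used in the second claim preserve the order-homomorphic measure $M$ (so that reversing a rule does not move it out of $\mathcal{R}_\tau$), which follows because the underlying probability model is symmetric in antecedent and consequent. Once these verifications are made, the corollary's algorithmic content — that the candidate-witness set for any $r_1$ is contained in a small, already-enumerated slice of $\mathcal{R}_\tau$ — is immediate.
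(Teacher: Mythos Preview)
Your proposal is correct and mirrors the paper's own treatment: the paper states the corollary as an immediate algorithmic consequence of the preceding Theorem and provides no separate proof, so your derivation---chaining $M(r_1)<M(r_2)$ from the Theorem with $M(r_1)\geq\tau$ to get $r_2\in\mathcal{R}_\tau$, and reading the shared-consequent requirement off the definition of speciousness together with the rule-reversal equivalences of Section~2.1---is exactly the intended argument, only made explicit. The extra bookkeeping you flag (that reversal preserves $M$) is a fair point the paper leaves implicit.
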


In practice, this means that speciousness detection can be done in the
post-processing phase, when only the best $K$ rules or all
sufficiently good rules are available. The only requirement is that
the results set $\mathcal{R}$ contains the best rules with the given
measure or at least all non-specious rules. This is not 
guaranteed, if one has used any suboptimal pruning heuristics like
minimum frequency thresholds, restrictions on the rule complexity, or
exclusion of negative dependency rules ($X\rightarrow \neg C$). In the
latter case, some specious rules may not be detected, even if the confounding factors were represented in the data.


\subsection{Algorithm}

The simplest approach for speciousness detection is to search for
the top-$K$ (positive and negative) dependency rules with an
order-homomorphic measure and then evaluate speciousness of each rule
$r_i$ with respect to better rules $r_j$ in the post-processing
phase. However, this approach typically results in a large number of
specious rules, which are redundant specializations of more generic
rules (Section \ref{specase1}). Therefore, it is desirable to
prune out redundant rules during the actual search phase, which also
accelerates the search remarkably (see e.g., \cite{kingfkais}). 

When the set of best non-redundant dependency rules has been
discovered, the rest of the specious rules can be detected in the
post-processing phase using Algorithm \ref{specdetect}. In the
algorithm, all three special cases of speciousness from Section
\ref{specialcases} as well as the normal case are handled
separately. One reason is that some special cases can be checked with less
computation. For example, there is no need to evaluate frequency
$n_{xq}$ if the rules are equivalent or $X\subset Q$, assuming that
$n_x$, $n_{xc}$, $n_q$, and $n_{qc}$ are available.  Birch's exact
test is also computationally demanding and is preferably performed only
when needed. 

The first case covers equivalent rules which cannot be considered
as specious per se. Still, they contain redundant information which
only complicates understanding. If $X$ and $Q$ are equivalent, then
each $X$'s rule has an equivalent $Q's$ rule; if $X$ and $\neg Q$ are
equivalent, then for each $X\rightarrow C{=}c$, there is $Q\rightarrow
C{\neq}c$. Therefore, it is sufficient to report the equivalence
information ($X\rightarrow Q$ or $X\rightarrow \neg Q$), unless it 
occurs as a dependency rule in the list (which happens when $|X|=1$ or $|Q|=1$).

The second case covers rules of the form $Q\subset X$, where
$X\rightarrow C{=}c$ is a non-redundant specialization of $Q\rightarrow C{=}c$. 
The opposite case where $X\subset Q$ has already been handled 
during the search. Since $\delta_1=0$, one could simply check whether
$\delta_2\leq 0$. However, evaluating significance of speciousness
detects also cases where $\delta_2$ is weakly positive.

The third case corresponds to Yule-Simpson's paradox, where both
$\delta_1$ and $\delta_2$ are non-positive, and it is not necessary to
evaluate the significance of speciousness at all. For example, if the
measure is $MI_S$, then it is known that $MI_{S}\leq 0$. This case
includes also a pathological special case which has to be checked
separately. Namely, when $P(X)=P(C{=}c)=P(X,C{=}c)$, then all rules
$Q\rightarrow C{=}c$ and $Q\rightarrow C{\neq}c$ have
$\delta_1=\delta_2=0$ and would be considered specious. This could potentially 
lose interesting dependencies and 
therefore, it is required that there
remains rule $X\rightarrow Q$ or $X\rightarrow \neg Q$ containing the
same information as $Q\rightarrow C{=}c$.

The fourth case is the normal case where the significance measure is
always evaluated.

\begin{algorithm}[!htb]
\caption{{SpecDetect}($\mathcal{R},K,\theta$) for detecting specious rules among the top-$K$ non-redundant dependency rules in an ordered list $\mathcal{R}$. Here, measure $M_c$ is increasing by goodness and $\theta$ is its minimum threshold.}
\label{specdetect}
\begin{code}
for $i=K$ \uto $2$ \urem{in ascending order}\\
\>take rule $r_i\in \mathcal{R}: Q \rightarrow C{=}c$\\
\>for $j=1$ \uto $i-1$ \urem{in descending order}\\
\>\>take rule $r_j\in \mathcal{R}: X\rightarrow A=a$\\
\>\>\uif ($Q=X$) \uor($Q=A$) \uor($C=X$) \uor($C=A$)\\
\>\>\>reverse rules if needed\\
\>\>\>resulting $Q'\rightarrow C'=c$ and $X'\rightarrow C'=a$\\
\>\>\>\uif ($c\neq a$) \uand ($\delta(X',Q')>0$)\\
\>\>\>\>take next $r_j$\\
\>\>\>\>$\delta_1=\delta_c(Q,C{=}c|X')$; $\delta_2=\delta_c(Q,C{=}c|\neg X')$\\
\>\>\>\ucom{Check four alternatives:}\\
\>\>\>\uif (($Q'$ and $X'$) or ($Q'$ and $\neg X'$) equivalent)\\
\>\>\>\>report equivalence and prune out $r_i$\\
\>\>\>\uif (($Q'\subset X'$) \uand ($c=a$) \uand ($M_c(r_i|r_j)\leq\theta)$)\\
\>\>\>\>prune out $r_i$\\
\>\>\>\uif (($\delta_1\leq 0$) \uand ($\delta_2\leq 0$))\\
\>\>\>\>\uif ($n_x=n_{xc}=n_c$) check pathological case\\
\>\>\>\>\uelse report Y-S paradox and prune our $r_i$\\
\>\>\>\uif ($M_c(r_i|r_j)<\theta$) \urem{normal case}\\
\>\>\>\> report and prune out $r_i$
\end{code}
\end{algorithm}

\section{Experiments}

\subsection{Test setting}

The main goal of the experiments was to explore the nature and extent
of specious rules and how their pruning affects results. For this
purpose we tested classical benchmark data sets with varying
dimensions and densities from the UCI Machine Learning Repository
\cite{MLrep} and the Frequent Itemset Mining Dataset Repository
\cite{FIMI}. The test data is described in Table \ref{testsets}.  All
sets are real world data except T10I4D100K and T40I10D100K which are
synthetic data sets simulating market basket data.

\begin{table}
\centering
\caption{Description of data sets: Abbr=abbreviation, $n$=number of rows, $k$=number of attributes, $tlen$=average transaction length.} 
\label{testsets}
\begin{tabular}{|l|l|r|r|r|}
\hline
Data&Abbr&$n$&$k$&$tlen$\\
\hline
Plants&Plants&22632&70&12.5\\
Chess&Chess&3196&75&37.0\\
Mushroom&Mush&8124&119&23.0\\
Accidents&Acc&340183&468&33.8\\
T10I4D100K&T1&100000&870&10.1\\
T40I10D100K&T4&100000&942&39.6\\
PumsbStar&PStar&49046&1934&50.5\\
Pumsb&Pumsb&49046&2113&74.0\\
\hline
\end{tabular}
\end{table}

In the experiments, we searched for the top-100 and top-1000 positive
or negative dependency rules with mutual information and analyzed the
detected specious rules. The search was done with the Kingfisher
program \cite{kingfkais}, which was extended with a special module for
speciousness testing. The basic version of Kingfisher already prunes
out redundant rules (a special case of specious rules) during the search
and it was used as a baseline for comparison. No minimum frequency
thresholds or other constraints were used, except with Accidents,
which was computationally the most demanding data set. When the top-1000
rules were searched from Accidents, the complexity of rules was
restricted to six attributes (max five attributes in the
antecedent). For speciousness testing with $MI_S$, we used a cautious
threshold $\theta=0.5$. Larger thresholds were tried if no type 3
specious rules were detected. Birch's $p$-value was calculated for
checking that $\theta$ was not too large (i.e., none of the pruned
partial dependencies were significant).

For each specious rule $Q\rightarrow C{=}c$, we recorded its subtype
according to the first $X\rightarrow C{=}c$ or $X\rightarrow C{\neq}c$ 
that made it specious. The subtypes were the same as in Algorithm 
\ref{specdetect}.
In addition, we calculated several statistics for comparing specious
(still non-redundant) and non-specious rules (their frequency, strength
of dependency, rule complexity) and evaluating the degree of
speciousness (conditional leverages, $MI_S$, Birch's $p$, strength of the
mediating rule $X\rightarrow Q{=}q$, $q\in \{0,1\}$).  Type 0 rules were
excluded from statistics, because they are a special case, which
cannot be considered specious per se.

\subsection{Results}

Proportions of specious rules and their distribution to subtypes are given
in Figure \ref{specdistr}. Mean statistics characterizing
non-equivalent specious rules in contrast to non-specious rules are
given in Table \ref{specrules100}. 

\begin{figure}[!ht]
\centering
\includegraphics[width=0.5\textwidth]{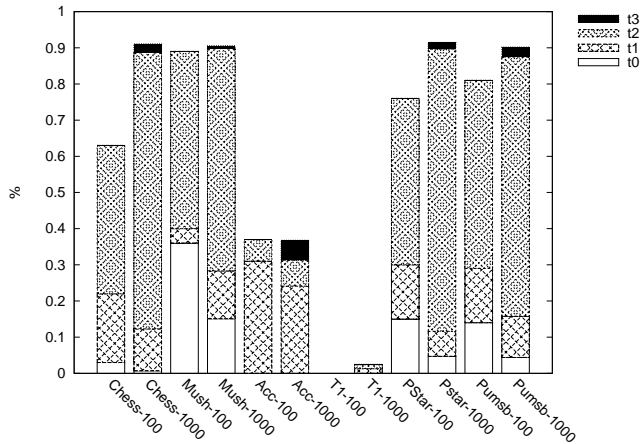}
\caption{Proportion of specious rules and their subtypes among the top-100 and top-1000 rules. Sets Plants and T4 are excluded because they contained no specious rules.} 
\label{specdistr}
\end{figure}

Figure \ref{specdistr} shows that the proportion of specious rules
varies greatly among the data sets. Chess, Mushroom, Accidents,
PumsbStar and Pumsb contained a large proportion of specious rules,
while Plants and T40I10D100K did not contain any and T10I4D100K only
sparsely (none among the top-100 rules). Interestingly, Plants,
T10I4D100K and T40I10D100K were the only sets of occurrence data,
listing either U.S.A. states where plant species occurred or product
items that occurred in market baskets. When the top-1000 rules were
studied, over 90\% of the 1000 best rules were specious in Chess,
Mushroom, PumsbStar, and Pumsb. In these data sets, the most common
subtype was 2 and even 78\% of rules could express Yule-Simpson's paradox. 
Accidents differed from these by having predominantly
type 1 specious rules. Type 0 rules occurred most commonly in
Mushroom. Type 3 rules were relatively rare and they did not occur
among the top-100 rules at all. One explanation is the cautious threshold,
and a larger threshold caught some type 3 rules among the top-100 rules in
Pumsb and PumsbStar. Another explanation is that the same rule can be
specious by many alternative rules and thus fall into different
types. This was demonstrated by checking possible $X\rightarrow C{=}c$ and $X\rightarrow C{\neq}c$ rules in a reversed order, which resulted in some specious rules  changing
their type to 0.

Table \ref{specrules100} shows that non-specious rules were in general better
than specious rules,
excluding type 0 rules. They necessarily had larger $MI$, because a rule
can become specious only by a better rule with the given goodness
measure. However, non-specious rules had also larger frequency and
leverage in all sets except Accidents. Among the top-100 rules, they had
also larger precision $p_{C|Q}$, but in the top-1000, the situation was
slightly different (in PumbStar and the T1 set, specious rules had higher
precision). In addition, specious rules were on average more complex
than non-specious rules. However, the difference was small in
Accidents, where many specious rules were generalizations of
non-specious rules. These results demonstrated that speciousness
pruning produced simpler and better 
rules, which is itself
desirable. We note that type 0 specious rules can well be better than
non-specious rules (e.g., in the top-1000 Mushroom $\avg(MI)$=4297)
because they are often equivalent with the very best rules.

An analysis of conditional leverages revealed 
that on average, $\delta_1$ was zero, but $\delta_2$ was often negative. This
means that in specious rules, $Q$ and $C{=}c$ tended to be negatively
associated given $\neg X$. The average $MI_S$ values were always
non-positive. Once again, Accidents differed from the others with
extremely low $MI_S$. A likely reason is that Accidents was also the
largest data set, with most frequent rules, which also tend to have
large $MI$. On the other hand, Accidents had smallest
conditional leverages (strong negative conditional dependencies) which
led to sign reversal and low $MI_S$-values. The mediating rules
$X\rightarrow Q{=}q$ were always strong, as expected, with
$\avg(\delta(X,Q{=}q))>\avg(\delta(Q,C{=}c))$. Still, $X$ and $Q$ had
relatively few common attributes, except in Accidents, where type 1
was common. For the top-100 rules, the average $p_B$ values were
large ($p_B\approx 1.00$), except for
Accidents $p_B\approx 0.91$. For the top-1000 rules, the $p_B$ values
were slightly smaller, $p_B$ ranging from 0.99 (Mushroom) to 0.88
(Accidents). With the top-1000, the minimum $p_B$ values were
substantially smaller, ranging from 0.15 (Accidents) to 0.35 (T1),
even if the maximum $MI_S$ threshold was the same. Still, none of them
was significant even in a traditional sense (where $\alpha=0.05$ or $\alpha=0.01$) and it is unlikely that
any true dependencies had been pruned out as specious.

{\footnotesize
\begin{table*}[!ht]
\caption{Mean values of statistics over all non-equivalent specious rules among the top-100 and top-1000 non-redundant rules with $MI$. Mean statistics for non-specious rules in parentheses.}
\label{specrules100}
\begin{center}
\begin{tabular}{|l|rr|rr|rr|rr|rr|rr|rr|rr|rr|}
\hline
Data&\multicolumn{2}{|c|}{$MI$}&\multicolumn{2}{|c|}{$fr$}&\multicolumn{2}{|c|}{$p_{C|Q}$}&\multicolumn{2}{|c|}{$p_{\neg C|\neg Q}$}&\multicolumn{2}{|c|}{$\delta$}&\multicolumn{2}{|c|}{$|Q|$}\\
\hline
\multicolumn{13}{|l|}{Top-100}\\
Plants&--&(8551)&--& (3535)&--& (0.86)&--& (0.95)&--& (0.120)&--&(1.3)\\
Chess&1338&(2159)&1287&(1819)&0.79&(0.93)&0.92&(0.98)&0.163&(0.176)&3.5&(1.2)\\
Mush&5622&(6483)&1771&(2350)&0.96&(0.99)&1.00&(0.98)&0.167&(0.206)&3.3&(0.6)\\
Acc&119793&(144940)&142478&(106631)&0.92&(0.95)&0.83&(0.89)&0.420&(0.317)&2.9&(2.3)\\
T1&--&(4339)&--&(730)&--&(0.89)&--&(0.99)&--&(0.007)&--&(1.4)\\
T4&--&(8064)&--&(1560)&--&(0.95)&--&(0.98)&--&(0.015)&--&(2.8)\\
PStar&46515&(46987)&23611&(24617)&0.99&(1.00)&0.99&(0.99)&0.245&(0.245)&2.3&(1.0)\\
Pumsb&47266&(47685)&23360&(24874)&0.99&(1.00)&1.00&(1.00)&0.247&(0.246)&2.6&(1.0)\\
\hline
\multicolumn{13}{|l|}{Top-1000}\\
Plants&--&(6186)&--&(2799)&--&(0.81)&--&(0.93)&--&(0.095)&--&(1.6)\\
Chess&946&(1438)&1138&(1601)&0.67&(0.87)&0.89&(0.88)&0.129&(0.141)&3.8&(2.3)\\
Mush&3707&(4091)&1802&(2622)&0.89&(0.95)&0.92&(0.86)&0.144&(0.161)&3.0&(2.2)\\
Acc&36784&(43751)&110655&(88285)&0.76&(0.79)&0.63&(0.68)&0.326&(0.259)&2.6&(2.4)\\
T1&2604&(2950)&448&(473)&0.95&(0.89)&0.99&(0.99)&0.004&(0.005)&2.5&(1.6)\\
T4&--&(7449)&--&(1284)&--&(0.95)&--&(0.99)&--&(0.013)&--&(2.9)\\
PStar&39428&(42149)&16000&(21975)&0.98&(0.98)&0.99&(0.98)&0.203&(0.232)&3.9&(2.0)\\
Pumsb&41208&(41988)&19276&(21655)&0.98&(0.98)&0.99&(0.98)&0.225&(0.232)&3.8&(2.6)\\
\hline
\end{tabular}
\end{center}
\end{table*}
}

\section{Conclusions}

The problem of Yule-Simpson's paradox and other specious dependencies
have bothered statisticians and empirical scientists for more than a
century. Still, no efficient method for detecting them has been known. In this research, we approached the problem from a new
perspective, in the context of statistical association rule
discovery. We showed that specious rules offer a unifying framework
for many types of undesirable, redundant or misleading association
rules. We introduced new theoretical properties that enable effective
identification of possible confounding factors without testing all of
the exponentially many possibilities. These results offer a remarkable
improvement to currently known weak conditions. Then we showed how
the properties can be implemented in the pattern discovery context, as
an efficient generic algorithm that discovers the most significant,
non-specious rules with any order-homomorphic significance
measures. Preliminary experiments with mutual information demonstrate
that specious rules and Yule-Simpson's paradox are indeed a common and
serious problem in association rule discovery. However, with
speciousness detection, association rules can reveal the real
dependency structure of data and help to avoid misleading conclusions.

\section*{Acknowledgments}
This research has been supported by the Academy of Finland under grant 
307026 and by the Australian Research Council under grant DP140100087.

\bibliographystyle{plain}

\end{document}